\documentclass{article}

\PassOptionsToPackage{round}{natbib}
\usepackage[preprint]{neurips_2026}

\usepackage[utf8]{inputenc}
\usepackage[T1]{fontenc}
\usepackage{hyperref}
\usepackage{url}
\usepackage{amsmath,amssymb,amsfonts,amsthm}
\usepackage{booktabs}
\usepackage{microtype}
\usepackage{graphicx}
\usepackage{nicefrac}
\usepackage{cleveref}

\newtheorem{theorem}{Theorem}
\newtheorem{proposition}[theorem]{Proposition}

\newcommand{\R}{\mathbb{R}}

\newcommand{\Od}{\mathrm{O}(d)}
\newcommand{\Rot}{R_{\mathrm{pos}}}

\begin{document}

\title{Content-Based Addressing for Long Context}
\author{%
  Mahesh Godavarti \\
  A Carrot, Inc. \\
  \texttt{m@acarrot.com}
}
\maketitle

\begin{abstract}
Rotary position embedding (RoPE) uses each token's integer position to determine the rotation applied inside attention.
This works well for local token order, but increasing context length creates a positional train-test mismatch: RoPE produces relative rotations at offsets not seen during training.
Methods that rescale, interpolate, randomize, or bias positions specify how attention handles those offsets, but still derive positional information from a growing token counter.
We instead divide a token stream into units, retain ordinary RoPE positions within each unit, and assign every completed unit an address computed from its content.
Adding units then applies the same learned map to new content rather than extending a positional range or an identifier table.
We prove that this construction preserves local RoPE exactly, leaves the attention comparison between two fixed tokens unchanged when other units are inserted or reordered, and does not create new relative rotations merely because more units are added.
In a character-level Tiny Shakespeare diagnostic, all-token validation perplexity remains approximately constant from contexts of 256 to 4096 characters.
A second diagnostic shows that content-based addressing can retrieve and use information from multiple serialized facts.
These are controlled shallow experiments, not scale benchmarks, but they support a direct prescription: use position to address locally and content to address across units.
\end{abstract}

\paragraph{Keywords:}
long context, content-based addressing, positional encoding, rotary position embedding, distribution shift, attention

\section{Introduction}
\label{sec:intro}

Long-context Transformers usually assign every token one global position.
At evaluation lengths beyond those used in training, this creates a positional train-test mismatch.
RoPE turns a position difference into a relative rotation~\citep{su2021roformer}.
Attention with Linear Biases (ALiBi) downweights tokens according to distance~\citep{press2022alibi}.
Position interpolation and YaRN rescale RoPE coordinates or frequencies~\citep{chen2023positioninterpolation,peng2023yarn}.
Randomized positions expose training to a broader offset distribution~\citep{ruoss2023randomized}.
These methods can improve behavior beyond the training length, but they still derive each token's rotation from its location in one serialized sequence.

We separate two jobs that global position currently performs.
Within a local text unit, position is the correct address: characters and tokens have an order, and nearby offsets recur across examples.
Across completed units, the unit's integer position in the serialized sequence need not determine its attention rotation.
Using that position makes the rotation between two fixed units depend on how many unrelated tokens were serialized between them.
It also creates new relative rotations solely because more units were placed in context.

We propose \emph{content-based addressing}.
Each token has a bounded position within a unit, such as a line, sentence, paragraph, chunk, or record.
Each completed unit receives an address $R_e$, an orthogonal matrix computed by a shared neural map from its content.
Attention combines the local position and unit address in the same query/key rotation used by RoPE.
The resulting token address is $A(i,e)=R_eR_i$: $R_i$ captures local order within the unit, and $R_e$ is the content-based address of unit $e$.
There is no global distance penalty or global unit counter.
Causal order still determines which units are available, while the attention score determines which available content is relevant.

The paper makes three contributions.
First, it gives a small modification of rotary attention that computes unit addresses from content and can be added to an existing RoPE model.
Second, \Cref{thm:context-invariance} proves that adding units does not create new relative rotations solely because the context is longer.
Third, two controlled diagnostics test complementary properties: stable language-model behavior as more text units are added, and retrieval from content-addressed units when the prediction depends on one of several serialized facts.

Content-based addressing prevents context growth from creating relative rotations unseen during training.
It does not reduce attention cost or softmax competition, and the shared content map must still generalize to new unit contents.

\section{Content-Based Addresses}
\label{sec:method}

Consider a token sequence partitioned into units $e$.
For even $d$, token $i$ in unit $e$ has input embedding $x_{e,i}\in\R^d$, where $i\in\{0,\ldots,L-1\}$ is a bounded local position and $L$ is the maximum unit length.
Let $R_i:=\Rot^i\in\Od$ denote the usual block-diagonal RoPE rotation at local position $i$, where $\mathrm O(d)$ is the group of $d\times d$ orthogonal matrices.

For a completed unit, we first form a content descriptor
\begin{equation}
z_e = \frac{1}{|e|}\sum_{i\in e}R_i x_{e,i}.
\label{eq:pool}
\end{equation}
This descriptor is sensitive to token order because each embedding is rotated according to its local position before pooling.
For example, when ``man'' and ``dog'' have different embeddings, swapping them changes the descriptor because each embedding receives a different local-position rotation.
The learned map can therefore assign the two sentences different unit addresses.
A shared map produces one angle per rotary block,
\begin{equation}
\theta_e=W\,\mathrm{LN}(z_e),
\qquad
R_e=\varphi(z_e):=\operatorname{diag}_b R_2(\theta_{e,b}),
\label{eq:content-address}
\end{equation}
where $\mathrm{LN}$ is LayerNorm and $R_2(\theta)$ is a two-dimensional rotation.
The address of token $(i,e)$ is therefore
\begin{equation}
A(i,e)=R_eR_i.
\label{eq:address}
\end{equation}
Because the implementation uses rotations in the same fixed two-dimensional blocks, the two factors commute and can be implemented by adding their angle vectors.

Queries and keys are rotated exactly as in RoPE:
\begin{equation}
\widehat q_{j,e'}=A(j,e')q_{j,e'},
\qquad
\widehat k_{i,e}=A(i,e)k_{i,e}.
\end{equation}
Their score contains the relative operator
\begin{equation}
\widehat q_{j,e'}^\top\widehat k_{i,e}
=q_{j,e'}^\top
\underbrace{A(j,e')^{-1}A(i,e)}_{P_{(i,e)\to(j,e')}}
k_{i,e}.
\label{eq:score}
\end{equation}
During causal generation, only completed units receive content addresses.
The final incomplete unit uses $R_e=I$ because its complete content descriptor is not yet available.
Let $\mathcal Z\subseteq\R^d$ denote the space of content descriptors $z_e$.

\begin{theorem}[Context-size invariance of content-based addressing]
\label{thm:context-invariance}
Let every unit have local positions in $\{0,\ldots,L-1\}$, let $\varphi:\mathcal Z\to\Od$ be any shared map, set $R_e=\varphi(z_e)$ and $R_i=\Rot^i$, and define $A(i,e)=R_eR_i$.
Then:
\begin{enumerate}
\item Within one unit, the relative operator is ordinary local RoPE:
\[
P_{(i,e)\to(j,e)}=R_j^{-1}R_i=\Rot^{i-j}.
\]
\item Inserting, deleting, or reordering other units does not change the relative operator between two fixed tokens.
\item For any number of units, every relative operator belongs to the fixed family
\[
\mathcal P_{\mathrm{content}}
=\left\{
R_j^{-1}\varphi(z')^{-1}\varphi(z)R_i:
i,j\in\{0,\ldots,L-1\},\ z,z'\in\mathcal Z
\right\},
\]
which does not depend on the number or serialized locations of the units.
By contrast, global RoPE on a serialized context of length $T$ produces the family
$\{\Rot^\delta:|\delta|\leq T-1\}$, which expands with $T$.
\end{enumerate}
\end{theorem}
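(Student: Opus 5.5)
The proof is a direct computation from the definition $A(i,e)=R_eR_i$ and the relative operator in \eqref{eq:score}, so I will handle the three parts in order, each by unpacking $P_{(i,e)\to(j,e')}=A(j,e')^{-1}A(i,e)=R_j^{-1}R_{e'}^{-1}R_eR_i$.

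For part 1, I set $e'=e$. Then $R_{e'}^{-1}R_e=I$ cancels, because $R_e$ is a single orthogonal matrix determined by $z_e$. This leaves $R_j^{-1}R_i=\Rot^{-j}\Rot^{i}=\Rot^{i-j}$, using that $\Rot^i$ are powers of one invertible matrix. This holds for any $\varphi$; commutativity of the block structure is not needed. I should note that this also covers the final incomplete unit, whose address is $R_e=I$, since the cancellation only requires that both tokens carry the same unit factor.

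For part 2, the key observation is that $P_{(i,e)\to(j,e')}$ depends only on four inputs: the local positions $i,j$ and the descriptors $z_e,z_{e'}$. By \eqref{eq:pool}, $z_e$ is computed from the tokens of unit $e$ alone, with their local positions, and $\varphi$ is a fixed shared map. Nothing in the formula refers to a global index, a unit count, or the content of any other unit. So inserting, deleting, or reordering other units leaves $i,j,z_e,z_{e'}$ unchanged, and hence leaves $P$ unchanged.

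The main thing to state carefully is what "fixed tokens" means. A fixed token keeps its own unit's content and its local position, and completed units stay completed, so its address is either $\varphi(z_e)$ or $I$ and is unaffected. I would phrase part 2 under this assumption. I would also remark that the attention \emph{score} between the two tokens is then unchanged too, since $q$ and $k$ are computed per token. The softmax normalization may still change, and part 2 does not claim otherwise.

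For part 3, every pair of tokens gives an operator of the form $R_j^{-1}\varphi(z')^{-1}\varphi(z)R_i$ with $i,j\in\{0,\ldots,L-1\}$ and $z,z'\in\mathcal Z$. This places it in $\mathcal P_{\mathrm{content}}$, whose definition contains no reference to the number of units or to serialized locations. To include incomplete units, I would either assume $I\in\varphi(\mathcal Z)$ or enlarge the family to use $\varphi(\mathcal Z)\cup\{I\}$; either way the family stays independent of context size.

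For the contrast with global RoPE, the operator between positions $t,s\in\{0,\ldots,T-1\}$ is $\Rot^{s-t}$ with $|s-t|\le T-1$, and every such $\delta$ is realized. For the family to "expand" strictly, the powers $\Rot^\delta$ must be distinct. This holds whenever some rotary frequency is not a rational multiple of $\pi$, which is the case for standard RoPE frequencies. I would state this as the mild condition under which the strict growth claim holds.
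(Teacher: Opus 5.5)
Your proposal is correct and is essentially the paper's proof. You substitute $A(i,e)=R_eR_i$ into the relative operator, cancel $\varphi(z_e)^{-1}\varphi(z_e)$ when $e=e'$, observe that only $i,j,z_e,z_{e'}$ appear, and read off membership in $\mathcal P_{\mathrm{content}}$. Your extra caveats about the identity address of the incomplete unit and the distinctness of powers of $\Rot$ are sound refinements the paper leaves implicit. One aside should be dropped: that the full score is unchanged because $q$ and $k$ are computed per token holds only in the first layer, or with $q,k$ held fixed, since deeper-layer queries and keys depend on the preceding context; the theorem does not need this claim.
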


\begin{proof}
Substitution in \Cref{eq:score} gives
$P=R_j^{-1}R_{e'}^{-1}R_eR_i=R_j^{-1}\varphi(z_{e'})^{-1}\varphi(z_e)R_i$.
When $e=e'$, the content factors cancel, giving $\Rot^{i-j}$.
The expression contains only the two tokens, so other units and their serialized locations cannot affect it.
All possible values lie in the displayed family, whose definition contains $L$ and $\mathcal Z$, but not the number of units.
The global-RoPE statement follows because serialized position differences range from $-(T-1)$ to $T-1$.
\end{proof}

The theorem does not claim that the map will generalize to arbitrary content absent from training.
It states a different guarantee: increasing the number of units does not itself create a new coordinate range.
The shared map also has a parameter count independent of the number of units, unlike a learned identifier table.

\section{Controlled Diagnostics}
\label{sec:experiments}

The diagnostics ask whether the construction can be implemented and used inside standard Transformer attention.
They are single-run, shallow experiments, not comparisons among tuned long-context systems.

\paragraph{Plain-text context growth.}
We train four character-level causal Transformers on Tiny Shakespeare using 256-character chunks and evaluate the same trained models with windows from 256 to 4096 characters.
Training loss is computed at every token.
We report perplexity over all tokens.
All models have four layers, four heads, and embedding dimension 128.
Continuous RoPE uses one global position counter.
ALiBi adds a linear distance bias to each attention head.
The random-address and content-addressed models share the same causal implementation and differ in how they produce line addresses.
The random-address model samples an independent angle for each completed line on every forward pass.
The content-addressed model instead computes each completed line's angle using \Cref{eq:pool,eq:content-address}.
Both models give the final incomplete line zero angle offset, so $R_e=I$.

\begin{table}[t]
\centering
\caption{All-token validation perplexity after training on 256-character Tiny Shakespeare chunks.}
\label{tab:shakespeare}
\small
\setlength{\tabcolsep}{4.5pt}
\begin{tabular}{@{}lccccc@{}}
\toprule
& \multicolumn{5}{c}{\textbf{Evaluation context}} \\
\cmidrule(l){2-6}
\textbf{Addressing scheme} & 256 & 512 & 1024 & 2048 & 4096 \\
\midrule
Content-based address & 4.81 & 4.74 & 4.85 & 4.83 & 4.86 \\
Random line address & 4.89 & 4.79 & 4.90 & 4.87 & 4.91 \\
ALiBi & 4.96 & 4.94 & 4.96 & 4.93 & 4.93 \\
Continuous RoPE & 4.70 & 5.99 & 8.23 & 10.71 & 13.36 \\
\bottomrule
\end{tabular}
\end{table}

\Cref{thm:context-invariance} removes degradation caused by relative rotations unseen during training.
\Cref{tab:shakespeare} shows that the content-addressed model remains approximately constant as context grows to 16 times the training length: all-token perplexity changes from 4.81 to 4.86, while continuous RoPE changes from 4.70 to 13.36.
The content-addressed and random-address models use the same causal implementation.
The content-addressed model has lower all-token perplexity than the random-address model at every evaluated context length in this run.
A flat curve does not by itself show how much distant information the model uses.

\paragraph{Retrieval from serialized facts.}
A second diagnostic tests retrieval from an otherwise unchanged serialized input.
A three-layer character model reads facts serialized with role markers and entity strings, then generates a sentence stating the target fact.
Training examples contain one fact, while evaluation uses either the true fact alone or the true fact plus four distractors.
The \texttt{chain2} facts contain two entities, and the \texttt{chain3} facts contain three.
Hit@5 requires every generated entity-name character to be among the model's five highest-probability predictions.
The baselines use continuous RoPE or replace it with ALiBi using slope 0.004 or 0.25.
The content-addressed model resets local position at each fact and text boundary and computes a content address for every completed fact using the same pooling and projection as above.
All four models receive the same serialized tokens in the same order.
They differ only in how position or content changes the attention scores.

\begin{table}[t]
\centering
\caption{Test hit@5 and perplexity (PPL) on target entity-name characters. Same-entities distractors change the relation. Same-relation distractors change the entities.}
\label{tab:segments}
\scriptsize
\setlength{\tabcolsep}{2.1pt}
\begin{tabular}{@{}llrrrrrrrr@{}}
\toprule
& & \multicolumn{4}{c}{\textbf{hit@5}} & \multicolumn{4}{c}{\textbf{PPL}} \\
\cmidrule(lr){3-6}\cmidrule(l){7-10}
\textbf{Prefix} & \textbf{Task} & RoPE & \shortstack{ALiBi\\(.004)} & \shortstack{ALiBi\\(.25)} & Content & RoPE & \shortstack{ALiBi\\(.004)} & \shortstack{ALiBi\\(.25)} & Content \\
\midrule
Single fact & chain2 & \textbf{.916} & .029 & .653 & .898 & \textbf{1.70} & 59.05 & 3.20 & \textbf{1.70} \\
Single fact & chain3 & .728 & .000 & .382 & \textbf{.800} & 2.23 & 58.36 & 3.60 & \textbf{2.01} \\
Same entities & chain2 & .964 & .102 & .628 & \textbf{.970} & 1.43 & 22.86 & 3.57 & \textbf{1.36} \\
Same entities & chain3 & .603 & .000 & .245 & \textbf{.878} & 3.03 & 31.33 & 4.66 & \textbf{1.68} \\
Same relation & chain2 & .208 & .010 & .015 & \textbf{.523} & 6.32 & 46.70 & 384.65 & \textbf{3.83} \\
Same relation & chain3 & .016 & .000 & .002 & \textbf{.190} & 26.38 & 50.41 & 375.92 & \textbf{5.51} \\
\bottomrule
\end{tabular}
\end{table}

\Cref{tab:segments} shows that the content-addressed model has higher hit@5 in five of six test rows and ties or has lower perplexity in all six.
Continuous RoPE has higher hit@5 only in the single-fact chain2 row.
At distance 100, the attention bias is $-0.4$ with slope 0.004 and $-25$ with slope 0.25.
The stronger ALiBi slope performs better than the weaker slope on the single-fact and same-entities rows, but both remain below content-based addressing in every multi-fact row.
Thus, a flat context-growth curve does not establish retrieval, and we report ALiBi as a control rather than a tuned retrieval comparison.
The multi-fact conditions require the model to retrieve the relevant fact from the serialized prefix and use it to generate the entity name.
They show that content-based addresses can support retrieval from the same serialized input without changing the Transformer architecture.

\section{Related Work and Discussion}
\label{sec:discussion}

Contextual Position Encoding (CoPE) makes position increments depend on context, allowing the model to count selected tokens or higher-level units~\citep{golovneva2024cope}.
PaTH Attention accumulates content-dependent Householder transformations along the token sequence~\citep{yang2025path}.
Both retain an address obtained by accumulating changes through intervening tokens.
Our construction instead gives every completed unit a direct content-based address, so adding unrelated intervening units does not change the attention comparison between two fixed tokens.

The pooling and projection in \Cref{eq:pool,eq:content-address} add $R_e$ to existing RoPE without changing attention heads or Transformer blocks.
Appendix~\ref{app:details} gives implementation details.
\Cref{tab:timing} reports measured training overhead from 10.3\% at 0.8M parameters to 2.0\% at 680M parameters.

A unit can be a completed sentence, paragraph, chapter, document, or book, with $R_e$ computed from its content.
Our experiments use lines and serialized facts.
Causal order preserves which units precede the query and \Cref{eq:pool} preserves internal token order, but $R_e$ does not encode ordinal distance between units.
Attention remains finite, the shared map must generalize to new content, and the model designer must choose unit boundaries.
Appendix~\ref{app:continuity} gives a sufficient condition under which attention scores vary continuously with content descriptors.
In the same-relation evaluation, different entity content can give facts different addresses, but one-fact training never teaches the model to distinguish several facts that share one relation.
The next tests are multi-fact training and fine-tuning this address computation in an open-weight RoPE model.

\section{Conclusion}

Long context need not use one growing positional range.
Content-based addressing preserves local RoPE and gives completed units content-based addresses, so adding units creates no new relative rotations.
Our diagnostics implement this construction in standard Transformer attention.

\bibliographystyle{apalike}
\bibliography{references}

\appendix

\section{Experimental Details}
\label{app:details}

\paragraph{Implementation.}
Zero-initializing the weights of the final angle projection gives $R_e=I$ initially, preserving the original model before fine-tuning.

\subsection{Tiny Shakespeare}

All four models are character-level causal Transformers with a 65-token vocabulary, four layers, four attention heads, embedding dimension 128, pre-norm residual blocks, Gaussian error linear unit (GELU) activations, and scaled dot-product attention.
Parameter counts range from approximately 810K to 819K.
Training uses one million characters with a 90/10 train/validation split, 256-character chunks, 5,000 iterations, batch size 64, AdamW with learning rate $3\times10^{-4}$, a cosine learning-rate schedule, weight decay 0.1, and gradient clipping at 1.0.
Cross-entropy loss is computed at every token.
Validation perplexity is computed over all tokens in windows of 256, 512, 1024, 2048, and 4096 characters.
The ALiBi model uses slopes $2^{-2}$, $2^{-4}$, $2^{-6}$, and $2^{-8}$ across its four heads.

Newlines determine unit boundaries.
The two addressed conditions use the same causal implementation and differ only in how they produce line addresses.
The random-address condition samples an independent angle vector for each completed line on every forward pass, with entries drawn from $\mathcal N(0,1)$.
The content-addressed condition rotates input token embeddings by within-line RoPE, uses a vectorized scatter-add to sum them by line, divides each sum by the line length, and applies LayerNorm followed by a linear map to $d/2$ angles.
All operations are vectorized.
The final incomplete line uses zero content angle, so $R_e=I$.
Every prediction uses only preceding text.

\subsection{Retrieval from Serialized Facts}

\paragraph{Data and prediction task.}
The data contain five domains: family, work, military, school, and church.
Each domain has six ordered roles.
For example, the family roles run from great-grandson through son and father to great-grandfather.
Each domain receives a random permutation of 3,000 lowercase training names of length 2--8.
Consecutive pairs form \texttt{chain2} facts, and consecutive triples form \texttt{chain3} facts.
A \texttt{chain2} example gives one entity and requires the model to generate the other.
A \texttt{chain3} example gives one entity and requires it to generate two.
The data contain 25 \texttt{chain2} relations, 20 \texttt{chain3} relations, and 134,935 facts.
Each fact can be stated with twelve text templates.
Each \texttt{chain2} fact yields two directed prediction tasks, and each \texttt{chain3} fact yields six ordered prediction tasks.
Together, these choices give 6,116,520 sentences.

Each training example contains one role-marked serialized fact followed by one randomly chosen text rendering.
For example, a two-entity prefix and sentence can have the form
\texttt{<son> adam <father> brian | if adam is the son, the father is brian.}
The entity order in the fact prefix is shuffled, so the model must use the role markers rather than prefix order.
Training loss is computed over every token in the sequence, while the reported metrics cover only target entity-name characters.
The text rendering is the current incomplete unit and uses $R_e=I$; content-derived addresses are assigned only to completed facts in its prefix.
During evaluation, the text template is supplied character by character, and only the entity-name characters are scored.
Hit@5 requires every scored character to be among the model's five highest-probability predictions.
Perplexity is computed over those same characters.
Rows labeled train use names seen during training, while rows labeled test use 100 disjoint names.

\paragraph{Evaluation conditions.}
The single-fact condition supplies only the correct fact.
The same-entities condition adds four facts containing the same entity names under different relations, so the model must select the relation requested by the sentence.
The same-relation condition instead adds four facts with the same relation and different entities, so the model must use the given entity to select the correct fact.
The four distractors are sampled uniformly from the facts that satisfy each condition.
Training contains only one fact per example.
It therefore never requires the model to choose among several facts with the same relation.
Content-based addressing gives those facts different addresses, but learning to use those addresses for this choice requires multi-fact training.

\paragraph{Models and training.}
All four models are three-layer, single-head causal Transformers with embedding dimension 60, softmax attention, batch size 32, learning rate $5\times10^{-4}$, and 100K training iterations.
Continuous RoPE uses one position counter across the fact prefix and sentence.
The content-addressed model sees the same serialized tokens, resets local position at each fact and text boundary, and assigns every completed fact a content-based address.
The two ALiBi models replace RoPE with slopes $2^{-8}\approx0.004$ and $2^{-2}=0.25$.
Because this diagnostic uses one attention head, each ALiBi model uses one slope rather than the set of slopes distributed across heads in a multi-head model.
The two runs use the weakest and strongest slopes in the four-head ALiBi schedule.

\Cref{tab:segments-full} gives the complete train and test results.
The content-addressed model leads on hit@5 in every multi-fact test row and five of six test rows overall.
The same-relation rows are the hardest for every model; content-based addressing reaches .523 hit@5 on chain2 and .190 on chain3.
On the single-fact train rows, the stronger ALiBi slope reaches .959 hit@5 on chain2 and .825 on chain3, while the weaker slope reaches .112 and .008.

\begin{table}[ht]
\centering
\caption{Complete results for retrieval from serialized facts. Higher hit@5 and lower perplexity are better. The best value in each metric and row is bold.}
\label{tab:segments-full}
\scriptsize
\setlength{\tabcolsep}{2.1pt}
\resizebox{\textwidth}{!}{%
\begin{tabular}{@{}lllrrrrrrrr@{}}
\toprule
& & & \multicolumn{4}{c}{\textbf{hit@5}} & \multicolumn{4}{c}{\textbf{PPL}} \\
\cmidrule(lr){4-7}\cmidrule(l){8-11}
\textbf{Prefix} & \textbf{Fact} & \textbf{Set} & RoPE & \shortstack{ALiBi\\(.004)} & \shortstack{ALiBi\\(.25)} & Content & RoPE & \shortstack{ALiBi\\(.004)} & \shortstack{ALiBi\\(.25)} & Content \\
\midrule
Single fact & chain2 & train & \textbf{1.000} & .112 & .959 & \textbf{1.000} & 1.13 & 15.22 & 1.47 & \textbf{1.09} \\
Single fact & chain2 & test  & \textbf{.916} & .029 & .653 & .898 & \textbf{1.70} & 59.05 & 3.20 & \textbf{1.70} \\
Single fact & chain3 & train & .957 & .008 & .825 & \textbf{1.000} & 1.28 & 14.44 & 1.49 & \textbf{1.22} \\
Single fact & chain3 & test  & .728 & .000 & .382 & \textbf{.800} & 2.23 & 58.36 & 3.60 & \textbf{2.01} \\
\midrule
Same entities & chain2 & train & \textbf{1.000} & .250 & .871 & \textbf{1.000} & 1.17 & 10.63 & 1.80 & \textbf{1.08} \\
Same entities & chain2 & test  & .964 & .102 & .628 & \textbf{.970} & 1.43 & 22.86 & 3.57 & \textbf{1.36} \\
Same entities & chain3 & train & .696 & .014 & .510 & \textbf{1.000} & 2.17 & 15.24 & 2.14 & \textbf{1.22} \\
Same entities & chain3 & test  & .603 & .000 & .245 & \textbf{.878} & 3.03 & 31.33 & 4.66 & \textbf{1.68} \\
\midrule
Same relation & chain2 & train & .323 & .029 & .034 & \textbf{.665} & 2.90 & 23.37 & 21.74 & \textbf{2.13} \\
Same relation & chain2 & test  & .208 & .010 & .015 & \textbf{.523} & 6.32 & 46.70 & 384.65 & \textbf{3.83} \\
Same relation & chain3 & train & .042 & .001 & .003 & \textbf{.366} & 7.88 & 28.00 & 16.13 & \textbf{2.17} \\
Same relation & chain3 & test  & .016 & .000 & .002 & \textbf{.190} & 26.38 & 50.41 & 375.92 & \textbf{5.51} \\
\bottomrule
\end{tabular}%
}
\end{table}

\section{Timing Benchmark}
\label{app:timing}

We measured one complete training step for the continuous-RoPE baseline and content-addressed model on an NVIDIA Tesla T4 using PyTorch 2.6.0, CUDA 12.4, and Python 3.12.
Context length was fixed at 1024.
Each result averages ten timed steps after one warmup step, with CUDA synchronization around the timed region.
The timing benchmark uses batch size 8 at 0.8M parameters rather than the training batch size of 64.
Batch size was reduced further for larger models to fit device memory.

\begin{table}[h]
\centering
\caption{Training-step timing at context length 1024. The baseline time includes the forward pass, backward pass, and optimizer update. Overhead is the content-addressed model's increase relative to the baseline.}
\label{tab:timing}
\begin{tabular}{@{}lrrrrrr@{}}
\toprule
\textbf{Parameters} & $d$ & \textbf{Layers} & \textbf{Heads} & \textbf{Batch} & \textbf{RoPE ms} & \textbf{Overhead} \\
\midrule
0.8M & 128 & 4 & 4 & 8 & 42 & +10.3\% \\
85M & 768 & 12 & 12 & 4 & 702 & +4.4\% \\
302M & 1024 & 24 & 16 & 2 & 1215 & +3.5\% \\
680M & 1536 & 24 & 16 & 1 & 1428 & +2.0\% \\
\bottomrule
\end{tabular}
\end{table}

The content-addressed model adds vectorized boundary detection, local rotation before pooling, scatter-add mean pooling, and a LayerNorm followed by a $d$-to-$d/2$ linear projection.
These operations require $O(Td)$ work for fixed hidden size.
Transformer training includes attention and feed-forward network work of order $O(T^2d+Td^2)$.

\section{Continuity of Content-Derived Scores}
\label{app:continuity}

Context-size invariance removes growth caused by the number of units, but it does not by itself constrain how addresses vary with content.
The following standard bound states one sufficient condition.

\begin{proposition}[Content-derived addresses change scores continuously]
Let $(\mathcal Z,\rho)$ be a metric space and let $\varphi:\mathcal Z\to\Od$ be $K$-Lipschitz in operator norm.
For unit-norm $q,k$, define
\[
F(z,z';i,j)=q^\top R_j^{-1}\varphi(z')^{-1}\varphi(z)R_i k.
\]
Then
\[
|F(z,z';i,j)-F(u,u';i,j)|
\leq K\bigl(\rho(z,u)+\rho(z',u')\bigr).
\]
\end{proposition}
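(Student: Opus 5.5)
The plan is to write the difference of scores as $q^\top R_j^{-1}\bigl(M(z,z')-M(u,u')\bigr)R_i k$, where $M(z,z'):=\varphi(z')^{-1}\varphi(z)$. The bound then reduces to an operator-norm estimate on $M$. Since $R_j^{-1}$ and $R_i$ are orthogonal, and $q,k$ are unit vectors, Cauchy--Schwarz with submultiplicativity gives
\[
|F(z,z';i,j)-F(u,u';i,j)|\le \|R_j^{-1}\|\,\|M(z,z')-M(u,u')\|\,\|R_i\|=\|M(z,z')-M(u,u')\|.
\]
Here I use that orthogonal matrices have operator norm $1$.

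Next I will telescope through the mixed term $\varphi(u')^{-1}\varphi(z)$:
\[
M(z,z')-M(u,u')=\bigl(\varphi(z')^{-1}-\varphi(u')^{-1}\bigr)\varphi(z)+\varphi(u')^{-1}\bigl(\varphi(z)-\varphi(u)\bigr).
\]
The second term has norm at most $\|\varphi(z)-\varphi(u)\|\le K\rho(z,u)$, again because orthogonal factors have norm one.

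For the first term, I will use $\varphi(\cdot)^{-1}=\varphi(\cdot)^\top$. Since transposition preserves the operator norm, $\|\varphi(z')^{-1}-\varphi(u')^{-1}\|=\|\varphi(z')-\varphi(u')\|\le K\rho(z',u')$. Multiplying by $\varphi(z)$ does not increase the norm. The triangle inequality then yields $K(\rho(z,u)+\rho(z',u'))$.

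I do not expect a real obstacle. The one step needing care is controlling the inverse. For general invertible matrices, inversion is not 1-Lipschitz, so the argument has to use orthogonality, namely inverse equals transpose, to get the clean constant $K$. The rest is bookkeeping of unit norms. I will also note that the bound holds uniformly in $i,j$, because $R_i=\Rot^i$ is orthogonal for every $i$.
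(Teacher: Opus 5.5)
Your proposal is correct and is essentially the paper's proof: both telescope through a mixed term, then combine the triangle inequality, the fact that orthogonal factors have operator norm one, and $\|\varphi(z')^{-1}-\varphi(u')^{-1}\|_{\mathrm{op}}=\|\varphi(z')-\varphi(u')\|_{\mathrm{op}}$, which you justify via inverse equals transpose. The only difference is that you insert $\varphi(u')^{-1}\varphi(z)$ where the paper inserts $\varphi(z')^{-1}\varphi(u)$, which is an immaterial choice; and your remark that orthogonality is needed to control the inverse is exactly the point the paper's one-line proof relies on.
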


\begin{proof}
Insert and subtract
$q^\top R_j^{-1}\varphi(z')^{-1}\varphi(u)R_i k$.
The triangle inequality, orthogonality of all surrounding factors, and
$\|\varphi(z')^{-1}-\varphi(u')^{-1}\|_{\mathrm{op}}=\|\varphi(z')-\varphi(u')\|_{\mathrm{op}}$
give the result.
\end{proof}

This stability bound does not require or guarantee that distinct descriptors receive distinct addresses.

\end{document}